\newtheorem{theorem}{Theorem}
\DeclareMathOperator*{\argmin}{\arg\!\min}
\DeclarePairedDelimiterX{\expectarg}[1]{[}{]}{%
  \ifnum\currentgrouptype=16 \else\begingroup\fi
  \activatebar#1
  \ifnum\currentgrouptype=16 \else\endgroup\fi
}
\newcommand{\activatebar}{%
  \begingroup\lccode`\~=`\|
  \lowercase{\endgroup\let~}\innermid 
  \mathcode`|=\string"8000
}
\newcommand{\innermid}{\nonscript\;\delimsize\vert\nonscript\;}
\algnewcommand{\Inputs}[1]{%
  \State \textbf{Inputs:}
  \Statex \hspace*{\algorithmicindent}\parbox[t]{.8\linewidth}{\raggedright #1}
}
\algnewcommand{\Initialize}[1]{%
  \State \textbf{Initialize:}
  \Statex \hspace*{\algorithmicindent}\parbox[t]{.8\linewidth}{\raggedright #1}
}
\algnewcommand{\Data}[1]{%
  \State \textbf{Data:}
  \Statex \hspace*{\algorithmicindent}\parbox[t]{.8\linewidth}{\raggedright #1}
}
\title{Tricks and Plug-ins for Gradient Boosting with Transformers}
\author{
Biyi Fang\textsuperscript{1} \quad
Truong Vo\textsuperscript{1} \quad
Jean Utke\textsuperscript{2} \quad
Diego Klabjan\textsuperscript{1} \\
\textsuperscript{1}Northwestern University \quad
\textsuperscript{2}Allstate \\
{\tt\small biyifang2021@u.northwestern.edu} \quad
{\tt\small truongvo2025@u.northwestern.edu} \\
{\tt\small jutke@allstate.com} \quad
{\tt\small d-klabjan@northwestern.edu}
}
\begin{document}
\maketitle
\begin{abstract}
Transformers, powered by multi-head attention and deep processing layers, have become the foundation of modern NLP models, achieving state-of-the-art results across diverse tasks. However, their success often comes at the cost of high computational demand and prolonged training times, particularly due to the need for extensive architecture and hyperparameter tuning. In this paper, we present a novel family of algorithms that combine boosting techniques with transformer architectures to enhance training efficiency and model performance. Our approach introduces a subgrid selection mechanism and an importance sampling strategy, which leverage boosting weights and integrate them into the transformer training pipeline via a least squares objective. This framework mitigates the reliance on manual architecture design and accelerates convergence, while preserving the expressive power of transformers. Empirical evaluations on multiple fine-grained language classification benchmarks reveal that our boosted transformer variants outperform standard transformers in both accuracy and training efficiency.
\end{abstract}
\begin{IEEEkeywords}
Transformers, Gradient Boosting Machines, Subsequence, Importance Sampling.
\end{IEEEkeywords}
\section{Introduction}

Transformers, such as BERT, GPT, RoBERTa, and T5, have demonstrated outstanding performance across a wide range of natural language processing (NLP) tasks, including sentiment analysis \cite{batra2021bert_sentiment, munikar2019finegrained}, question answering, and text generation \cite{raffel2020exploring, brown2020language}. 
These models have also been successfully applied in speech-related NLP tasks such as speech recognition \cite{Karita2019ImprovingTE,Yeh2019TransformerTransducerES,Dong2018SpeechTransformerAN} and machine translation \cite{Li2019NeuralSS,Gangi2019AdaptingTT,Zhou2018SyllableBasedSS}. Despite their success, identifying an optimal architecture or fine-tuning strategy for a given task remains a challenging problem, as task-specific text representations vary significantly.

To address these challenges, a growing body of research has focused on improving the efficiency and adaptability of transformer models. Neural Architecture Search (NAS) methods such as AutoFormer \cite{chen2021autoformer} and Primer \cite{so2021primer} have been proposed to automate transformer design, particularly for efficient and lightweight architectures. While effective, these methods still require substantial computational resources and extensive GPU usage. An alternative line of work explores ensemble techniques, especially boosting, which offer strong theoretical guarantees and practical performance benefits. For example, AdaMix \cite{wang2022adamix} introduces a mixture-of-adapter approach that captures boosting effects while maintaining parameter efficiency. However, existing techniques still face challenges related to memory usage and scalability—particularly when large pre-trained models are used as base learners. Limited work has explored the design of boosting strategies under partial data regimes, where each weak learner is trained on a distinct subset of the data. Exploring this direction could improve training efficiency, promote model diversity, and enhance generalization—especially in low-resource or fine-grained NLP tasks.

In this paper, we propose \textbf{BoostTransformer}, a novel family of boosting algorithms designed for sequence data that synergistically combines the principles of boosting and transformers. To promote diversity among weak learners, we adopt a feature subsampling strategy inspired by random forests. Adapting this approach to unstructured sequential data requires several novel design choices. Additionally, we integrate importance sampling by assigning probabilities to individual samples, guiding the model to focus on more informative examples. BoostTransformer further incorporates boosting weights into the transformer architecture and optimizes each learner using a least squares objective, enabling robust and efficient sequential learning. 

To further enhance efficiency and reduce training overhead, we introduce \textbf{Subsequence BoostTransformer}, an extension that eliminates the need for full-sequence data input when training each weak learner. In Subsequence BoostTransformer, important tokens are selectively extracted from the input sequence based on attention distributions \cite{Vaswani2017AttentionIA}. While this selection may omit direct dependencies between consecutive tokens, it preserves the most informative content, allowing the model to learn effectively from compressed input. As a result, Subsequence BoostTransformer achieves higher accuracy with lower computational cost compared to the former BoostTransformer.

Moreover, motivated by the observation that BoostTransformer tends to overfit early during training, we propose a third variant, \textbf{Importance-sampling BoostTransformer}, which explicitly addresses this issue by combining boosting with a principled importance sampling strategy. This algorithm first computes a probability distribution over all training samples based on their residuals. At each boosting iteration, it selects a subset of samples according to this distribution and trains a weak learner on the selected data. This approach not only delays overfitting but also improves generalization accuracy and significantly reduces runtime. We provide a formal proof showing that the optimal importance-sampling distribution is proportional to the norm of the residuals.

Finally, we conduct comprehensive computational experiments demonstrating the advantages of our proposed methods. BoostTransformer consistently outperforms standard transformers in accuracy and stability. Both Subsequence BoostTransformer and Importance-sampling BoostTransformer offer additional gains by reducing training time while maintaining or improving predictive performance. On the IMDB, Yelp, and Amazon datasets, BoostTransformer, Subsequence BoostTransformer, and Importance-sampling BoostTransformer achieve average accuracy improvements of 0.87\%, 0.55\%, and 0.79\%, respectively, over standard transformers. Notably, Subsequence BoostTransformer and Importance-sampling BoostTransformer require only two-thirds and one-half of the training time, respectively.

In summary, we make the following contributions.
\begin{itemize}
    \item We introduce BoostTransformer integrating transformer architectures with feature subsampling and importance sampling.
    \item We develop two enhanced variants: Subsequence BoostTransformer and Importance-sampling BoostTransformer.
    \item We conduct comprehensive experiments, demonstrating that all proposed methods outperform standard transformers in accuracy and stability. 

\end{itemize}

\section{Related Work}
In this section, we review several variations of Gradient Boosting Machines (GBMs) that are most relevant to our proposed algorithms, along with two important enhancements: subgrid token selection and importance sampling.

Recurrent neural networks (RNNs), long short-term memory networks (LSTMs), and transformers have become foundational models for sequence modeling and transduction tasks such as language modeling and machine translation \cite{Bahdanau2015NeuralMT, Sutskever2014SequenceTS, Vaswani2017AttentionIA}. Some prior works have explored hybrid models that combine boosting with RNNs or LSTMs. For instance, Chen and Lundberg \cite{Chen2018HybridGB} proposed a hybrid model in which LSTMs are used for supervised representation learning, and the resulting features are fed into a gradient boosting tree model (XGBoost) for prediction. However, the feature selection process is not influenced by the boosting objective, resulting in a disconnect between the representation and the downstream model. Another attempt, by Assaad et al. \cite{Assaad2008ANB}, introduced a boosting-based regression framework using RNNs as base learners. Their approach emphasizes different time points by training base learners on selected subsets of the time series. While these works provide preliminary connections between boosting and sequence models, the integration of boosting with transformer architectures remains largely unexplored. Additionally, while attention mechanisms in transformers have been studied extensively to interpret model behavior \cite{Clark2019WhatDB}, the use of attention distributions to guide token selection—such as in our Subsequence BoostTransformer has not been systematically investigated.

The second enhancement we incorporate is importance sampling, which has been shown to improve convergence and efficiency in various learning paradigms. Prior work has explored importance sampling in the context of stochastic gradient descent (SGD) \cite{Needell2014StochasticGD, Zhao2015StochasticOW}, deep learning \cite{Katharopoulos2018NotAS}, and minibatch sampling \cite{Csiba2018ImportanceSF}. These studies demonstrate that prioritizing samples based on their contribution to the loss or gradient norm can accelerate training and reduce variance. However, despite its success in SGD-based frameworks, importance sampling has not yet been generalized to boosting algorithms, leaving a promising direction that our work begins to explore.

\section{Algorithms}
In this section, we propose three algorithms combining boosting and transformers from different perspectives. We assume a BERT-like bidirectional transformer classifier \cite{Devlin2019BERTPO,liu2019roberta}. The first token of each sequence is a special classification token, and the corresponding final hidden state output of this token is used as the aggregated representation for the classification.

\subsection{BoostTransformer}
Inspired by BoostCNN \cite{Brahimi2019BoostedCN}, we propose BoostTransformer which combines boosting and transformers for a sequence classification problem. We are given a sample $x_i\in\mathcal{X}$, which contains a sequence of tokens, and its class label $z_i\in\left\{1,2,\cdots,M\right\}$. The risk function, the functional gradient and the optimal boosting coefficient $\alpha^t$ are exactly the same as those in (\ref{risk function}), (\ref{functional gradient}), and (\ref{boost parameter}) (Appendix \hyperref[sec:background]{A}). The algorithm follows standard gradient boosting machine.
\begin{algorithm}[H]
\footnotesize

  \caption{BoostTransformer}
  \label{alg:boostTransformer}
  \begin{algorithmic}[1]
  \Inputs{number of classes $M$, number of boosting iterations $N_b$, shrinkage parameter $\nu$, dataset $\mathcal{D}=\left\{(x_1,z_1),\cdots,(x_n,z_n)\right\}$ where $x_i$ is a token sequence and $z_i \in \{1,\cdots,M\}$ is the class label}
  
  \Initialize{Initialize prediction function $f(x) = \mathbf{0} \in \mathbb{R}^M$}

  \For{$t = 1,2,\cdots,N_b$}
    \State Compute example weights $w(x_i, z_i)$ using $f(x_i)$ 
    
    \Comment{see Eq.~(\ref{weight update})}
    \State Train a transformer-based weak learner $g_t^*$ 
    
    \Comment{see Eq.~(\ref{weak learner train})}
    \State Compute the optimal step size $\alpha_t$ using Eq.~(\ref{boost parameter})
    \State Update ensemble prediction: 
    
    $f(x) \gets f(x) + \nu \alpha_t g_t^*(x)$
  \EndFor

  \State \Return final classifier $f(x)$
  \end{algorithmic}
\end{algorithm}

\subsection{Subsequence BoostTransformer}
Combining the subgrid trick and BoostTransformer means applying the subgrid trick to each weak learner in BoostTransformer. Different from deep CNNs, transformers are able to deal with sequences of any length, thus, there is no issue when transferring information from the current weak learner to the succeeding weak learner. We denote $g_0$ as the basic weak learner, which deals with the whole dataset, and all the succeeding $g_t$'s as the additive weak learners. Moreover, subsequence BoostTransformer defines an importance index for each token $\mathcal{w}$ in the vocabulary based on the attention distribution. More precisely, the importance value of token $\mathcal{w}$ is computed by adding two parts; the first part is the importance of the token $\mathcal{w}$ itself, and the second part is the importance of token $\mathcal{w}$ to the remaining tokens in the same sample. In an $L$-layer transformer for a sequence $x$ of length $s$ (following \cite{Devlin2019BERTPO} we assume that the first token in $x$ is a placeholder, which indicates that the corresponding token in the final layer is used as the embedding for classification), and positions $1\leq i,j\leq s$, and layer $k$ for $1\leq k\leq L$, let the attention from position $i$ to position $j$ between layer $k-1$ and $k$ be denoted by $a(i,j;k;x)$. We have $\sum_{j=1}^s a(j,i;k;x)=1$ for every $i,k,x$. Then, given a transformer with $L$ layers, the self-importance of token $\mathcal{w}$ in position $p$ in a sample $x_i$ is
\begin{align}
\label{importance_self}
I^S(\mathcal{w},x_i)&=\left(\prod_{k=1}^{L-1} a(p,p;k;x_i)\right)\cdot a(p,1;L;x_i)\nonumber\\
&\approx a(p,1;L;x_i),
\end{align}
The importance of token $\mathcal{w}$ to others is 
{\footnotesize
\begin{align}
\label{importance_rest}
I^R(\mathcal{w},x_i)=\left[\prod_{k=1}^{L-1}\max_{j,j\neq p} a(p_{k-1},j;k;x_i)\right]\cdot a(p_{L-1},1;L;x_i),
\end{align}
}
where $p_{k-1}=\mathrm{argmax}_{j,j\neq p}a(p_{k-2},j;k-1;x_i)$  for $k=2,3,\cdots, L-1$, and $p_0=p$. The first term computes the product of the maximum attention values through the path which does not contain $p$ until the second to last layer. For the second term, as it has been shown in \cite{Devlin2019BERTPO}, the classification layer only takes the $1$st position of the last transformer layer which is corresponding to the classification token, therefore, the formula in (\ref{importance_rest}) does not check all possible attention distributions; instead, it counts the attention value from the position $p_{L-1}$ to the $1$st position in the last transformer layer directly. After the aforementioned importance values are computed, the importance value of the vocabulary word $\hat{\mathcal{w}}$ is
\begin{align}
\label{importance_agg}
    I(\hat{\mathcal{w}})=\sum_{
    \begin{matrix}
x_i,\mathcal{w}\in x_i \\
\mathcal{w} = \hat{\mathcal{w}}
\end{matrix}} \left(I^S(\mathcal{w},x_i)+ I^R(\mathcal{w},x_i)\right).
\end{align}
Once the important tokens are selected, the algorithm constructs a filtered version of each input sequence, denoted $x_i^t$, containing only the tokens in the selected vocabulary subset $V_t$. The corresponding weak learner $g_t$ is then trained to fit the residuals using the reduced input. The objective at iteration $t$ becomes:
\begin{align}
\label{subgrid weak learner train}
  \mathcal{L}(w,g)=\sum_{(x_i,z_i)\in\mathcal{D}}\left\|g(x_i^t)-w(x_i,z_i)\right\|^2, 
\end{align}
where $w(x_i, z_i)$ is the boosting weight derived from the functional gradient of the ensemble’s current prediction. The final boosted classifier aggregates the predictions of all weak learners trained on token-pruned subsequences:
\begin{align}
    f(x) = \sum_{t=1}^N \alpha_t g_t(x^t),
    \label{subgrid classifier}
\end{align}
where each $x^t$ denotes the subsequence of $x$ restricted to the token subset used at iteration $t$, and $\alpha_t$ is the step size selected to minimize the overall boosting loss.

Different from standard BoostTransformer, subsequence BoostTransformer first reviews the whole dataset in steps~\ref{subT:basic start}-\ref{subT:basic end} and generates the basic weak learner $g_0^*$. Once the basic weak learner is created, in each iteration, subsequence BoostTransformer first updates the attention-based importance vector $I_{\mathcal{w}}$ for any $\mathcal{w}\in V_{t-1}$ in step~\ref{subT:generate matrix}, and selects $\sigma$ fraction of the tokens to form the vocabulary set $V_t$, and lastly constructs a new sample $x_i^t$ by deleting any tokens not in $V_t$ in step~\ref{subT:select subgrid}. After the new sample $x_i^t$ is constructed, subsequence BoostTransformer initializes the weights of the current transformer by using the weights in $g_{t-1}^*$ and trains the transformer with $x_i^t$ to minimize the squared error in (\ref{subgrid weak learner train}) in steps~\ref{subT:subgrid start}-\ref{subT:subgrid end}. Lastly, the algorithm finds the boosting coefficient $\alpha_t$ by minimizing (\ref{boost parameter}) in step~\ref{subT:coeff} and adds the additive weak learner to the ensemble in step~\ref{subT:subgrid update}.

\begin{algorithm}[H]
\footnotesize
  \caption{subsequence BoostTransformer}
  \label{alg:subBoostTrans}
  \begin{algorithmic}[1]
  \Inputs{number of classes $M$, number of boosting iterations $N_b$, shrinkage parameter $\nu$, dataset $\mathcal{D}=\left\{(x_1,z_1),\cdots,(x_n,z_n)\right\}$ where $z_i\in\left\{1,\cdots,M\right\}$ is the label of sample $x_i$, and $0<\sigma<1$}
    \Initialize{set $f(x)=\mathbf{0}\in\mathbb{R}^M$, $V_0=\left\{\mathcal{w}\vert \mathcal{w}\in x_i  \mathrm{\,\,for\,\, some\,\,}x_i\right\}$ }
    \State compute $w(x_i,z_i)$ for all $(x_i,z_i)$, using (\ref{weight update}) \label{subT:basic start}
    \State train a transformer $g_0^*$ to optimize (\ref{weak learner train})\label{subT:basic end}
     \State $f(x) = g_0^*$
    \For{t = $1,2,\cdots ,$ $N_b$ }
    \State update importance values $I_{\mathcal{w}}$ for $\mathcal{w}\in V_{t-1}$, using (\ref{importance_self}), (\ref{importance_rest}) and (\ref{importance_agg})\label{subT:generate matrix}
    \State form $V_t\subset V_{t-1}$ with $\frac{|V_t|}{|V_{t-1}|}\approx\sigma$ and $I(\mathcal{w})>I(\mathcal{w}') \forall \mathcal{w}\in V_t,\mathcal{w}' \in V_{t-1}\setminus V_t$   and form a new sample $x_i^t$ for each sample $i$\label{subT:select subgrid}
    \State compute $w(x_i,z_i)$ for all $i$, using (\ref{weight update}) and (\ref{subgrid classifier})\label{subT:subgrid start}
    \State train a transformer $g_t^*$ to optimize (\ref{subgrid weak learner train})\label{subT:subgrid end}
    \State find the optimal coefficient $\alpha_t$, using (\ref{boost parameter}) and (\ref{subgrid classifier})\label{subT:coeff}
     \State $f(x) = f(x) + \nu\alpha_t g_t^*$\label{subT:subgrid update}
      \EndFor
      \State \textbf{end for}
  \end{algorithmic}
\end{algorithm}

\subsection{Importance-sampling-based BoostTransformer}
Importance sampling, a strategy for preferential sampling of more important samples capable of accelerating the training process, has been well studied in stochastic gradient descent (SGD) \cite{Alain2015VarianceRI}. However, there is virtually no existing work combining the power of importance sampling with the strength of boosting. Motivated by the phenomenon that overfitting appears early in standard BoostTransformer, we propose importance-sampling-based BoostTransformer, which combines importance sampling and BoostTransformer. Importance-sampling-based BoostTransformer mimics importance sampling SGD by introducing a new loss function and computing a probability distribution for drawing samples. Similarly, importance-sampling-based BoostTransformer computes a probability distribution in each iteration, and draws a subset of samples to train the weak learner based on the distribution. The probability distribution is
\begin{align}
\label{importance sample distribution}
    P(I=i)=\frac{\left\|w(x_i,z_i)\right\|}{\sum_{(x_j,z_j)\in\mathcal{D}}\left\|w(x_j,z_j)\right\|},
\end{align}
which yields the new loss function for a subset of samples ${\mathcal{I}}$ to be
{\footnotesize
\begin{align}
\label{importance sample train loss}
    \bar{\mathcal{L}}_{\mathcal{I}}(w,g)=\sum_{(x_i,z_i)\in\mathcal{I}}\frac{1}{\left|\mathcal{D}\right|P(I=i)}\left\|g(x_i)-w(x_i,z_i)\right\|^2.
\end{align}
}
To any minimization algorithm one would typically use.We then apply any optimization algorithm with respect to (\ref{importance sample train loss}) (by further using mini-batches or importance sampling). The entire algorithm is exhibited in Algorithm \ref{alg:impBoostTrans}.
\begin{algorithm}[H]
\footnotesize
  \caption{importance-sampling-based BoostTransformer}
  \label{alg:impBoostTrans}
  \begin{algorithmic}[1]
  \Inputs{number of classes $M$, number of boosting iterations $N_b$, shrinkage parameter $\nu$, dataset $\mathcal{D}=\left\{(x_1,z_1),\cdots,(x_n,z_n)\right\}$ where $z_i\in\left\{1,\cdots,M\right\}$ is the label of sample $x_i$, and $0<\sigma<1$}
    \Initialize{set $f(x)=\mathbf{0}\in\mathbb{R}^M$}
    \State compute $w(x_i,z_i)$ for all $x_i$, using (\ref{weight update}) \label{imT:basic start}
    \State train a transformer $g_0^*$ to optimize (\ref{weak learner train})\label{imT:basic end}
     \State $f(x) = g_0^*$
    \For{t = $1,2,\cdots,$ $N_b$ }
    \State compute probability distribution $P_t$, using (\ref{importance sample distribution})\label{imT:importance sample}
        \State draw independently  $|\mathcal{I}^t|$ samples, which is $\sigma$ fraction of the samples, based on $P_t$\label{imT:select sample}
        \State compute $w(x_i,z_i)$ for $(x_i,z_i)\in \mathcal{I}^t$, using (\ref{weight update}) \label{imT:subgrid start}
    \State train a transformer $g_t^*$ to optimize (\ref{importance sample train loss}) on $\mathcal{I}^t$\label{imT:subgrid end}
    \State find the optimal coefficient $\alpha_t$, using (\ref{boost parameter}) on $\mathcal{I}^t$\label{imT:coeff}
     \State $f(x) = f(x) + \nu\alpha_t g_t^*$\label{imT:subgrid update}
      \EndFor
      \State \textbf{end for}
  \end{algorithmic}
\end{algorithm}
Importance-sampling-based BoostTransformer starts with learning the full-size dataset and training a basic weak learner in steps~\ref{imT:basic start}-\ref{imT:basic end}. In each iteration, the algorithm first computes the probability distribution $P_t$ in step~\ref{imT:importance sample} and selects a subset $\mathcal{I}^t$ of samples based on the distribution in step~\ref{imT:select sample}. Once the dataset is created, it computes the weights and trains a transformer by using the unbiased loss function (\ref{importance sample train loss}), following by finding an optimal boosting coefficient in steps~\ref{imT:subgrid start}-\ref{imT:subgrid update}.

% In the rest of this section, we provide all analysis of the optimal probability distribution in importance-sampling-based BoostTransformer. 
Given current aggregated classifier $f_{t-1}=\sum_{i=1}^{t-1}g_{i}^*$, let us define the expected training progress attributable to iteration $t$ as
{\footnotesize
\begin{align*}
    \mathbb{E}_{P_{t}}\left[ \Delta^{(t)} \right]=&\left\|f_{t-1}-f^*\right\|^2-\mathbb{E}_{P_{t}}\left[\left.\left\| f_{t}-f^*\right\|^2 \right|\mathcal{F}^{t-1}\right].
\end{align*}
}
Here $f^*$ denotes the solution to (\ref{risk function}), and the expectation is taken over the probability distribution $P_t$, and $\mathcal{F}^{t-1}$ contains the whole history of the algorithm up until iterate $t-1$. We assume that gradient sampling is unbiased. Inspired by the work in \cite{Zhao2015StochasticOW}, we prove that the optimal probability distribution is proportional to the boosting weight at each iteration.
\begin{theorem}\label{thm:1}
In $\max_{P_{t}}$ $\mathbb{E}_{P_{t}}\left[ \Delta^{(t)}\right]$, the optimal distribution for importance-sampling-based BoostTransformer to select each sample $i$ is proportional to its “boosting weight norm:” , i.e. (\ref{importance sample distribution}).
\end{theorem}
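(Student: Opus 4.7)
The plan is to recast $\mathbb{E}_{P_t}[\Delta^{(t)}]$ as a variance-minimization problem over the sampling distribution $P_t$, following the same spirit as \cite{Zhao2015StochasticOW}, and then identify the optimizer by Cauchy--Schwarz. First I would use the fact that the reweighting $1/(|\mathcal{D}|P_t(i))$ in (\ref{importance sample train loss}) makes the surrogate an unbiased estimator of the full-batch squared loss, so that the effective stochastic functional-gradient direction $\hat d_t = \tfrac{1}{|\mathcal{D}|P_t(i)} w(x_i,z_i)$ associated with a single draw $i\sim P_t$ has conditional mean $d_t := \tfrac{1}{|\mathcal{D}|}\sum_{j} w(x_j,z_j)$ that is independent of $P_t$. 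Expanding $f_t = f_{t-1} + \nu\alpha_t g_t^*$ and taking conditional expectation yields
\begin{align*}
\mathbb{E}_{P_t}[\Delta^{(t)}] = 2\nu\alpha_t\langle f^{*}-f_{t-1},\, d_t\rangle - \nu^2\alpha_t^2\,\mathbb{E}_{P_t}\|\hat d_t\|^2,
\end{align*}
so only the last term depends on $P_t$, and a direct computation gives $\mathbb{E}_{P_t}\|\hat d_t\|^2 = \sum_{i} \|w(x_i,z_i)\|^2/(|\mathcal{D}|^2 P_t(i))$. Maximizing $\mathbb{E}_{P_t}[\Delta^{(t)}]$ therefore reduces to minimizing $\sum_i \|w(x_i,z_i)\|^2/P_t(i)$ subject to $\sum_i P_t(i)=1$ and $P_t(i)\ge 0$.

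Second, I would close the optimization by Cauchy--Schwarz:
\begin{align*}
\Bigl(\sum_i \|w(x_i,z_i)\|\Bigr)^2 = \Bigl(\sum_i \sqrt{P_t(i)}\,\tfrac{\|w(x_i,z_i)\|}{\sqrt{P_t(i)}}\Bigr)^2 \le \Bigl(\sum_i P_t(i)\Bigr)\Bigl(\sum_i \tfrac{\|w(x_i,z_i)\|^2}{P_t(i)}\Bigr),
\end{align*}
with equality iff $\sqrt{P_t(i)} \propto \|w(x_i,z_i)\|/\sqrt{P_t(i)}$, i.e.\ $P_t(i) \propto \|w(x_i,z_i)\|$. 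Normalizing to a probability distribution recovers exactly (\ref{importance sample distribution}). The same conclusion follows from a Lagrangian calculation with multiplier tied to the simplex constraint, which I would include as a sanity check.

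The main obstacle I anticipate lies in the first paragraph: identifying the learned weak learner $g_t^*$, together with the subsequently optimized coefficient $\alpha_t$, with a stochastic functional-gradient step whose only $P_t$-dependence sits in a second-moment term. Algorithm~\ref{alg:impBoostTrans} trains a transformer to approximately minimize (\ref{importance sample train loss}) and then separately selects $\alpha_t$ on the sampled subset; a rigorous argument either has to assume that on the drawn points $g_t^*(x_i)$ realizes the residual $w(x_i,z_i)$ up to the importance rescaling, or linearize the update around $f_{t-1}$ so that the inner-product cross term is $P_t$-independent in expectation and $\alpha_t$ can be treated as conditionally deterministic given $\mathcal{F}^{t-1}$. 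Bridging this gap between the population-level gradient analysis and the per-iteration empirical minimization is the most delicate part of the proof; once it is in place, the rest is a one-line Cauchy--Schwarz argument.
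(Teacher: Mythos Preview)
Your proposal is correct and follows essentially the same route as the paper: expand $\mathbb{E}_{P_t}[\Delta^{(t)}]$, observe that only the second-moment term $\mathbb{E}_{P_t}\|\hat d_t\|^2$ depends on $P_t$, reduce to minimizing $\sum_i \|g_t^i\|^2/P_t^i$, and optimize on the simplex. The only cosmetic differences are that the paper works with a subset $\mathcal{I}^t$ of independent draws and explicitly reduces to the single-sample variance (your single-draw formulation skips straight to that), and the paper invokes Jensen's inequality for $x\mapsto x^2$ where you use Cauchy--Schwarz, which are the same inequality here; the paper also carries the per-sample functional gradient $g_t^i$ throughout and only identifies it with $w(x_i,z_i)$ via (\ref{eq:grad-weight}) at the end, whereas you substitute $w$ from the start---your acknowledged ``main obstacle'' is exactly this identification, and the paper handles it in the same idealized way you anticipate.
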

\begin{proof}
See Appendix \hyperref[pf:thm1]{B}.
\end{proof}
Based on the fact that the new loss function with respect to the probability distribution is unbiased, we discover that maximizing the improvement of the boosting algorithm is equivalent to minimizing the functional gradient variance. By applying Jensen's inequality, the optimal probability distribution is essentially proportional to the boosting weights, which are easy to obtain, in boosting algorithms.  

\section{Experimental Study}
In this section, we present an empirical evaluation comparing the performance of four models: the standard Transformer, BoostTransformer, Subsequence BoostTransformer, and Importance-sampling BoostTransformer. Our goal is to assess the individual and combined contributions of three key components: the boosting framework, the subgrid (subsequence) token selection strategy, and the importance sampling mechanism. These components are evaluated in terms of classification accuracy, generalization behavior, training stability, and computational efficiency. To ensure fair comparisons, we maintain consistent model configurations and training settings across all experiments unless otherwise specified. The training is conducted on an NVIDIA Titan XP GPU, with hyperparameters tuned individually for each method using validation performance. We report results on three widely used benchmark datasets—IMDB, Yelp, and Amazon—which span different domains and text lengths, allowing us to assess the robustness of the proposed methods across diverse sequence modeling tasks.

\begin{figure*}[!t]
\begin{minipage}[t]{0.5\textwidth}
  \centering
  \includegraphics[width=\linewidth]{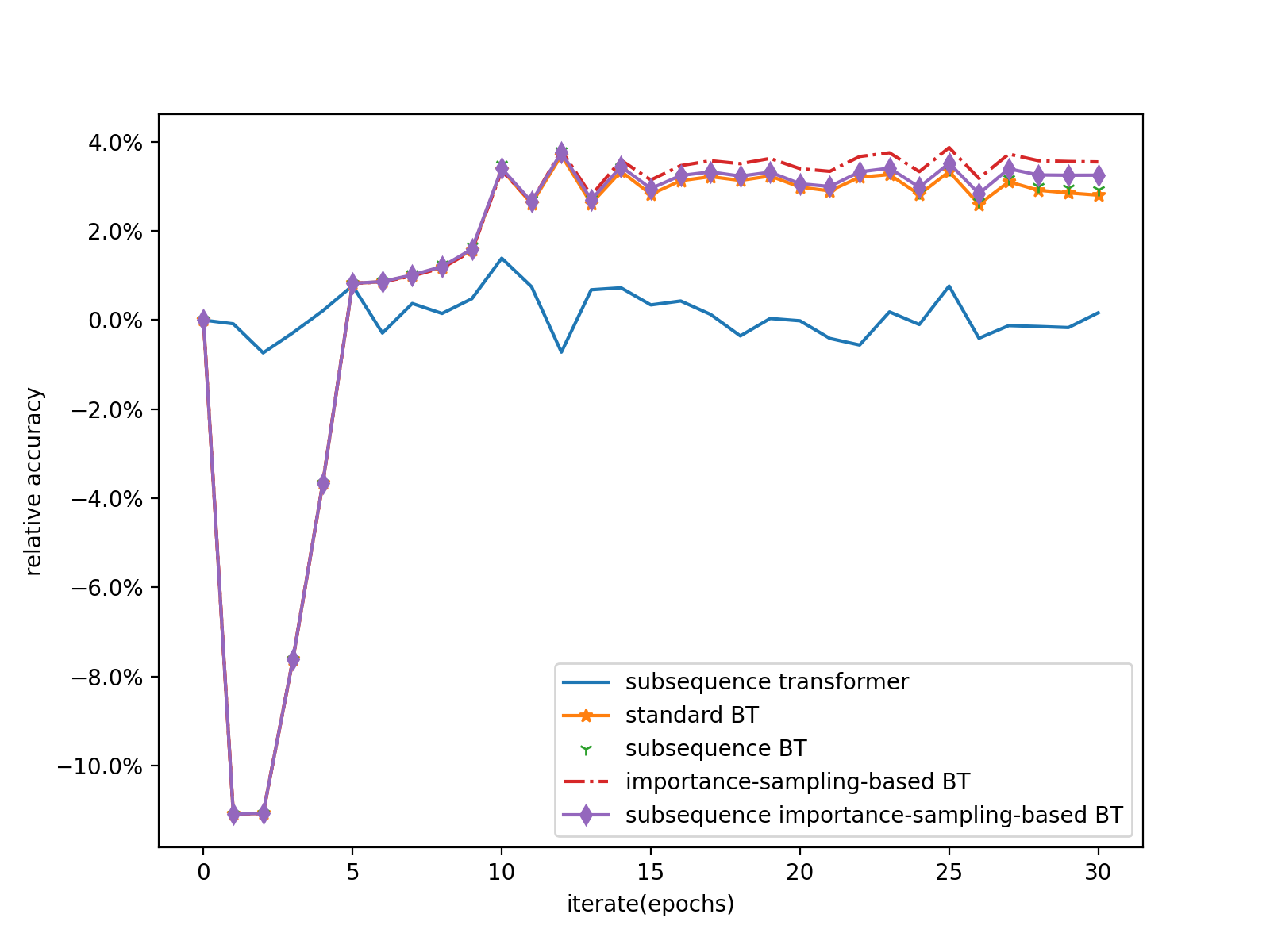}
  \captionof{figure}{Relative Accuracy on IMDB}
  \label{fig:imdb}
\end{minipage}%
\begin{minipage}[t]{0.5\textwidth}
  \centering
  \includegraphics[width=\linewidth]{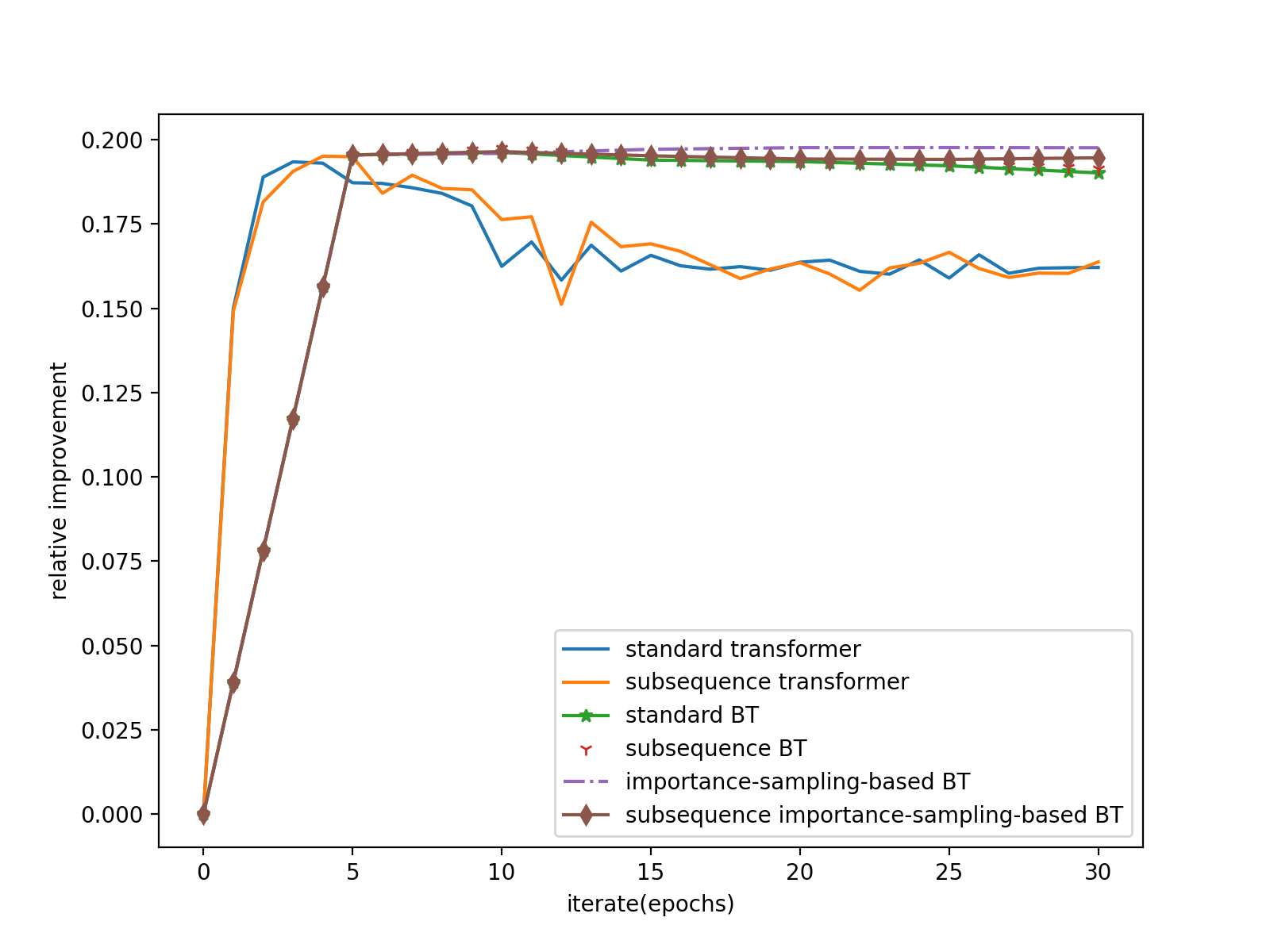}
  \captionof{figure}{Improvement on IMDB}
  \label{fig:imdb_imp}
\end{minipage}
\begin{minipage}[t]{0.5\textwidth}
  \centering
  \includegraphics[width=\linewidth]{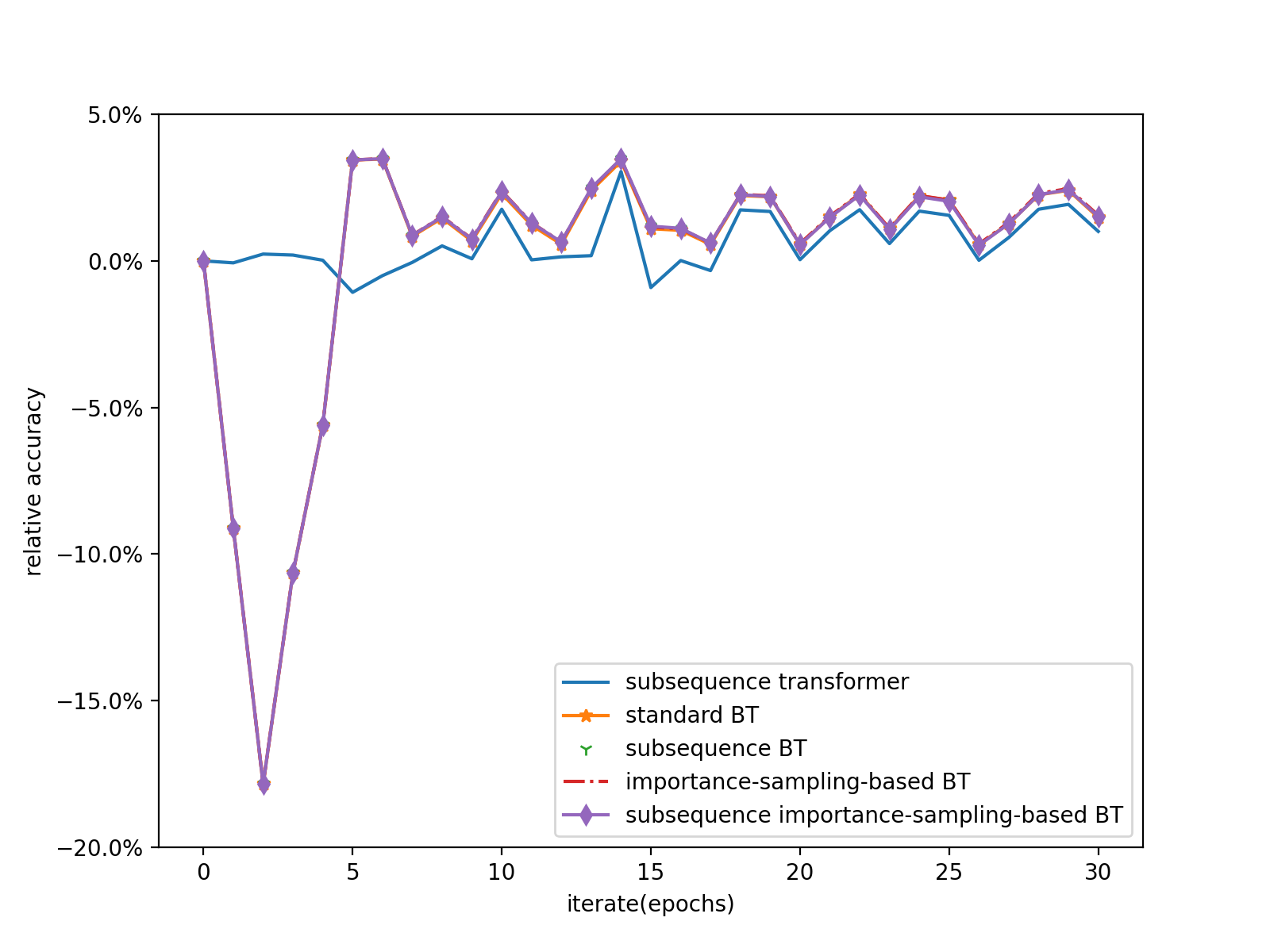}
  \captionof{figure}{Relative Accuracy on Yelp}
  \label{fig:yelp}
\end{minipage}%
\begin{minipage}[t]{0.5\textwidth}
  \centering
  \includegraphics[width=\linewidth]{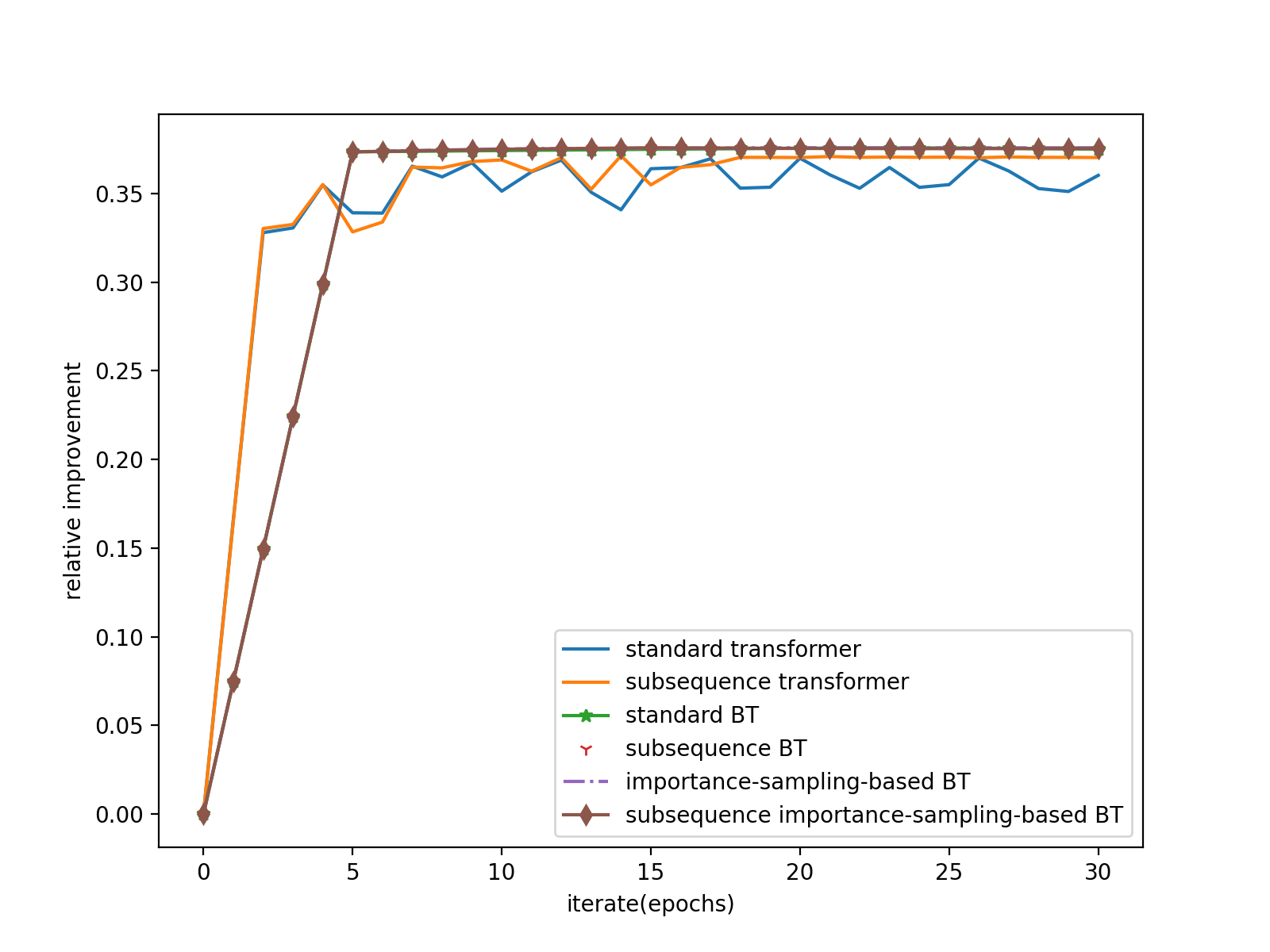}
  \captionof{figure}{Improvement on Yelp}
  \label{fig:yelp_imp}
\end{minipage}
\begin{minipage}[t]{.5\textwidth}
  \centering
  \includegraphics[width=\linewidth]{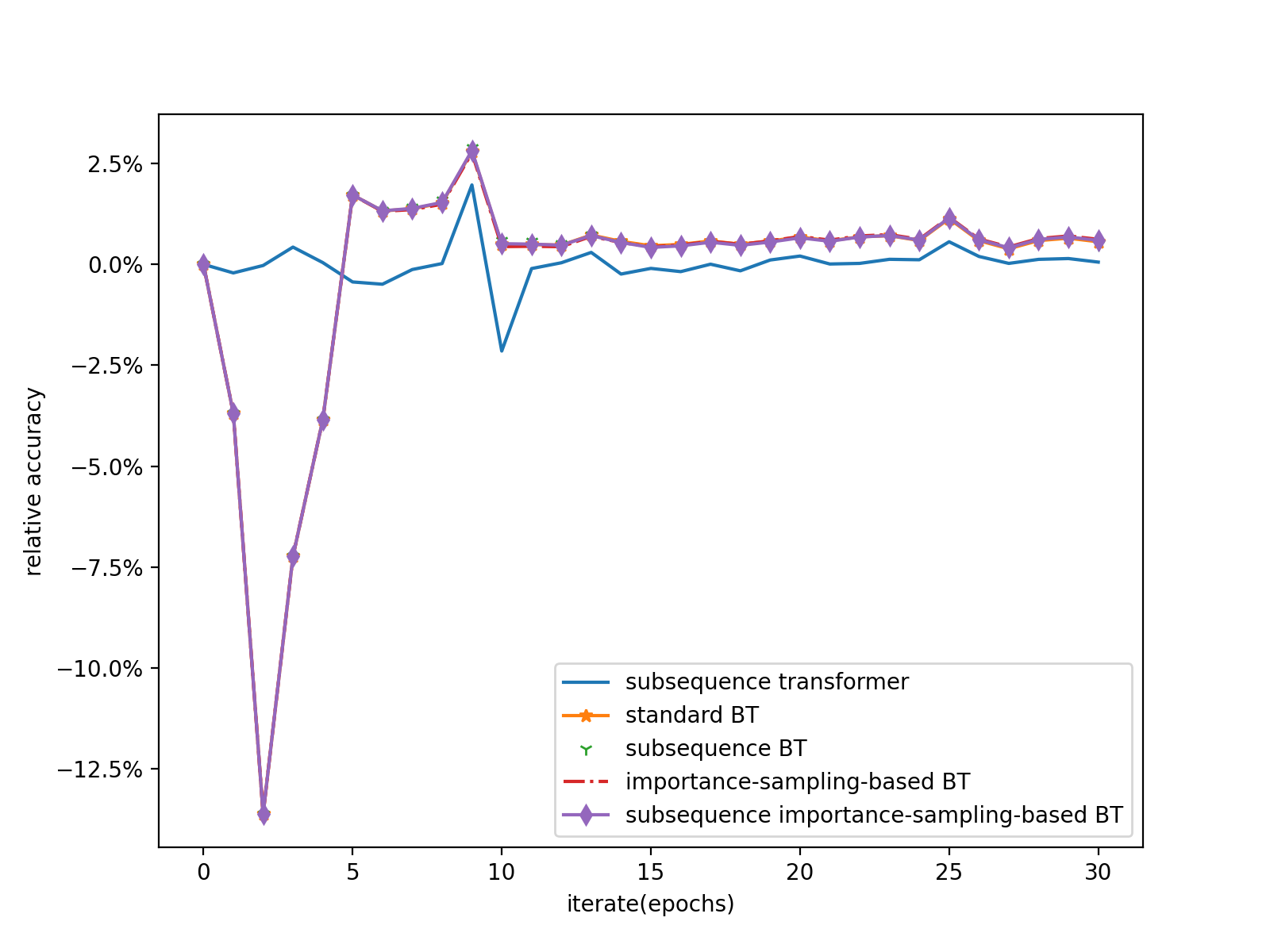}
  \captionof{figure}{Relative Accuracy on Amazon}
  \label{fig:amazon}
\end{minipage}%
\begin{minipage}[t]{.5\textwidth}
  \centering
  \includegraphics[width=\linewidth]{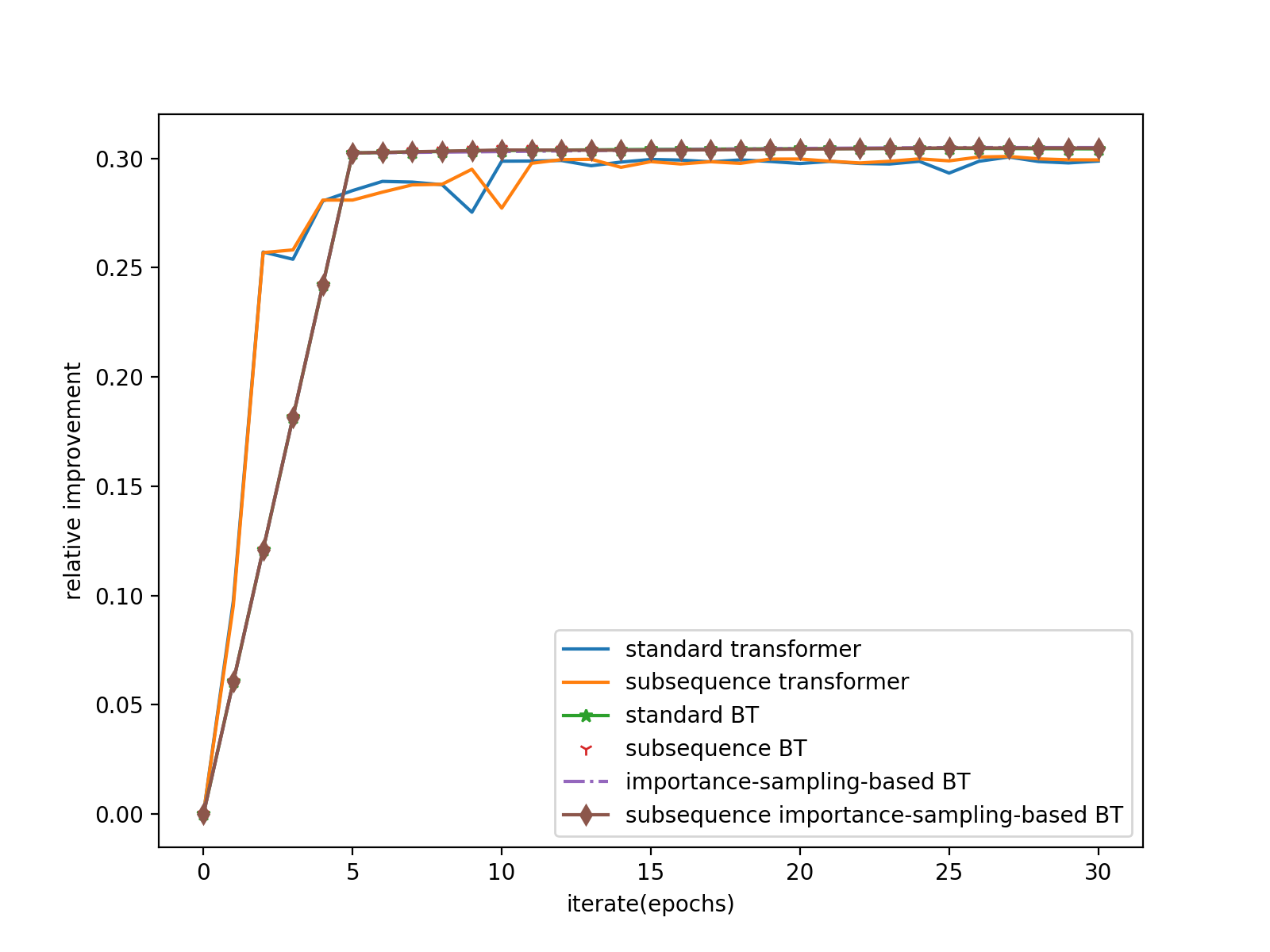}
  \captionof{figure}{Improvement on Amazon}
  \label{fig:amazon_imp}
\end{minipage}
\end{figure*}

The weak learner used is RoBERTa-based \cite{liu2019roberta} from the HuggingFace library with only word embeddings to be pre-trained weights. Using transformer based boosting algorithms, we train an ensemble of 6 transformers each with 5 epochs (these numbers yield good performance). In subsequence BoostTransformer, we pick the most important $80\%$ of the tokens in the vocabulary and reconstruct the dataset based on this new vocabulary. In importance-sampling-based BoostTransformer, the first flavor, in each iteration, we select $80\%$ of the samples based on the probability distribution in (\ref{importance sample distribution}) without further subsequence technique. In subsequence importance-sampling-based BoostTransformer, we first select $80\%$ of the samples based on the probability distribution in (\ref{importance sample distribution}), and then pick the most important $80\%$ of the tokens in the current vocabulary given by the selected $80\%$ samples, after that, we reconstruct the dataset based on this modified vocabulary. For comparison, we train the vanilla transformer and subsequence transformer, which randomly removes $20\%$ of the tokens and trains the network on the dataset for $30$ epochs. To train the model, we use AdamW \cite{Loshchilov2019DecoupledWD} with learning rate $10^{-5}$, weight decay $0.01$ and batch size $16$. We use linear learning rate decay with warmup ratio 0.06. 

We start by presenting the three public datasets used: IMDB \cite{Maas2011LearningWV}, Yelp polarity reviews and Amazon polarity reviews \cite{McAuley2013HiddenFA}. The IMDB dataset, which is for binary sentiment classification, contains a set of 25,000 highly polar movie reviews for training, and 25,000 for testing. The Yelp polarity reviews dataset, which is a subset of the dataset obtained from the Yelp Dataset Challenge in 2015, consists of $100,000$ training samples and $38,000$ testing samples. The classification task for this dataset is predicting a polarity label by considering stars 1 and 2 negative, and 3 and 4 positive for each review text. The last dataset we use is the Amazon polarity reviews dataset, which is a subset of the original Amazon reviews dataset from the Stanford Network Analysis Project (SNAP). Dealing with the same classification task as the Yelp polarity review dataset, the Amazon polarity reviews dataset contains $100,000$ training samples and $25,000$ testing samples. The subsampled datasets are standard, i.e. we did not create our own subsamples. Empirically we found that a weak learner with 6 heads and 6 layers achieves good robust performance.

Given the architecture of the weak learner, we start by discussing experiments on IMDB. In Figure \ref{fig:imdb}, we compare the relative performances of the algorithms with respect to the vanilla transformer. As it shows, all versions of BoostTransformer do not perform as good as the standard transformer and subsequence transformer in the first few epochs. However, they catch up quickly and dominate the performance in the remaining training epochs. Even more, based on Figure \ref{fig:imdb_imp}, which represents each model's relative improvement with respect to its initial weights, all versions of BoostTransformer maintain their performances as the number of epochs increases, while the performances of the standard transformer and the subsequence transformer start decreasing and fluctuating dramatically after the first few epochs, which implies that all versions of BoostTransformer are more robust than the standard and subsequence transformer. Next, we evaluate relative performances with respect to the vanilla transformer on the Yelp and Amazon polarity review datasets. From Figures \ref{fig:yelp}-\ref{fig:amazon_imp}, we discover that the superior and more robust behavior of boosting algorithms over transformer is vigorous.

 \begin{figure}[!t]
  \centering
  \begin{minipage}[t]{0.5\textwidth}
    \centering
    \includegraphics[width=\linewidth]{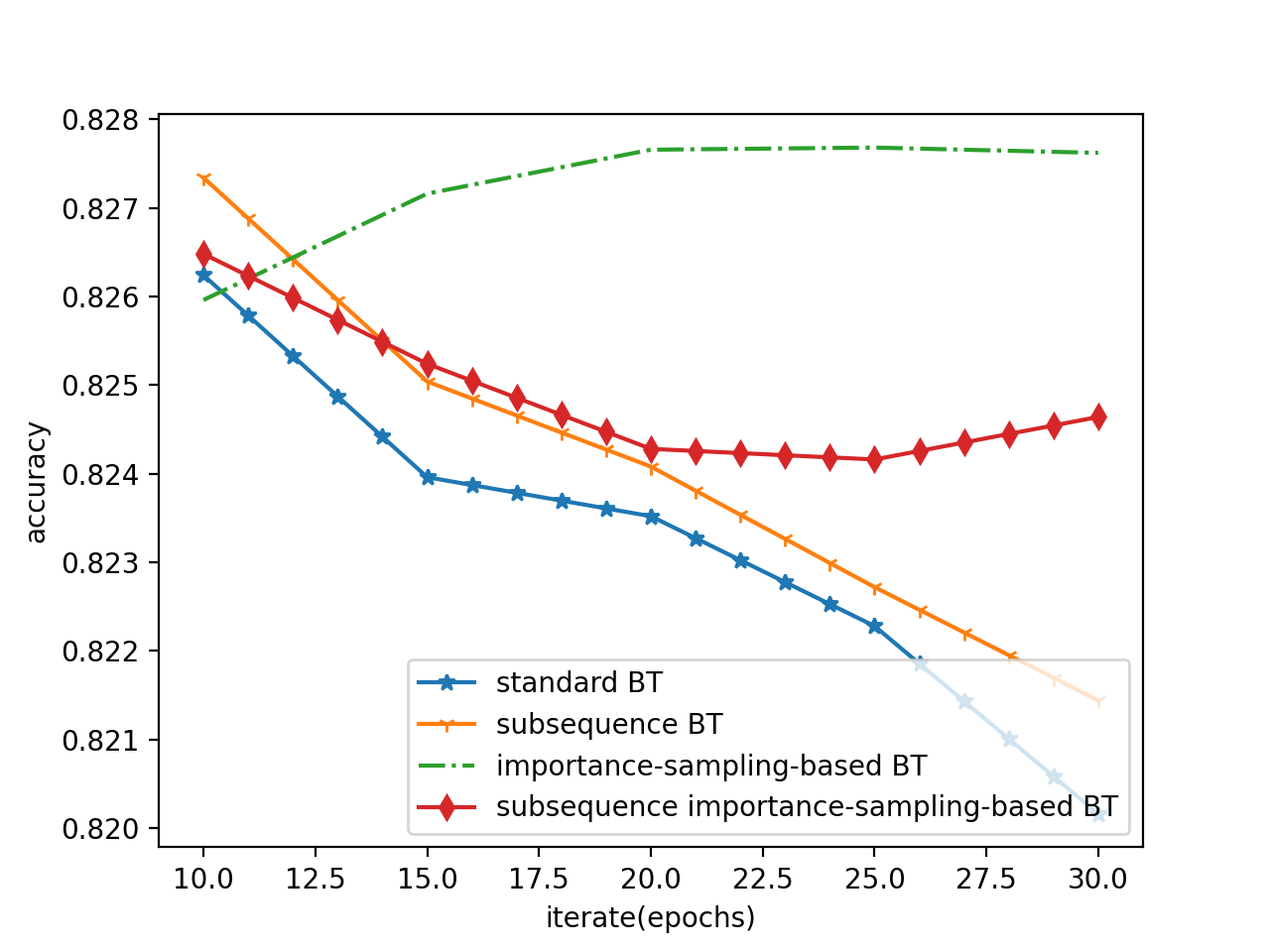}
    \caption{Zoomed-in attention heatmap for IMDB.}
    \label{fig:imdb_zoom}
  \end{minipage}
  \hfill
  \begin{minipage}[t]{0.5\textwidth}
    \centering
    \includegraphics[width=\linewidth]{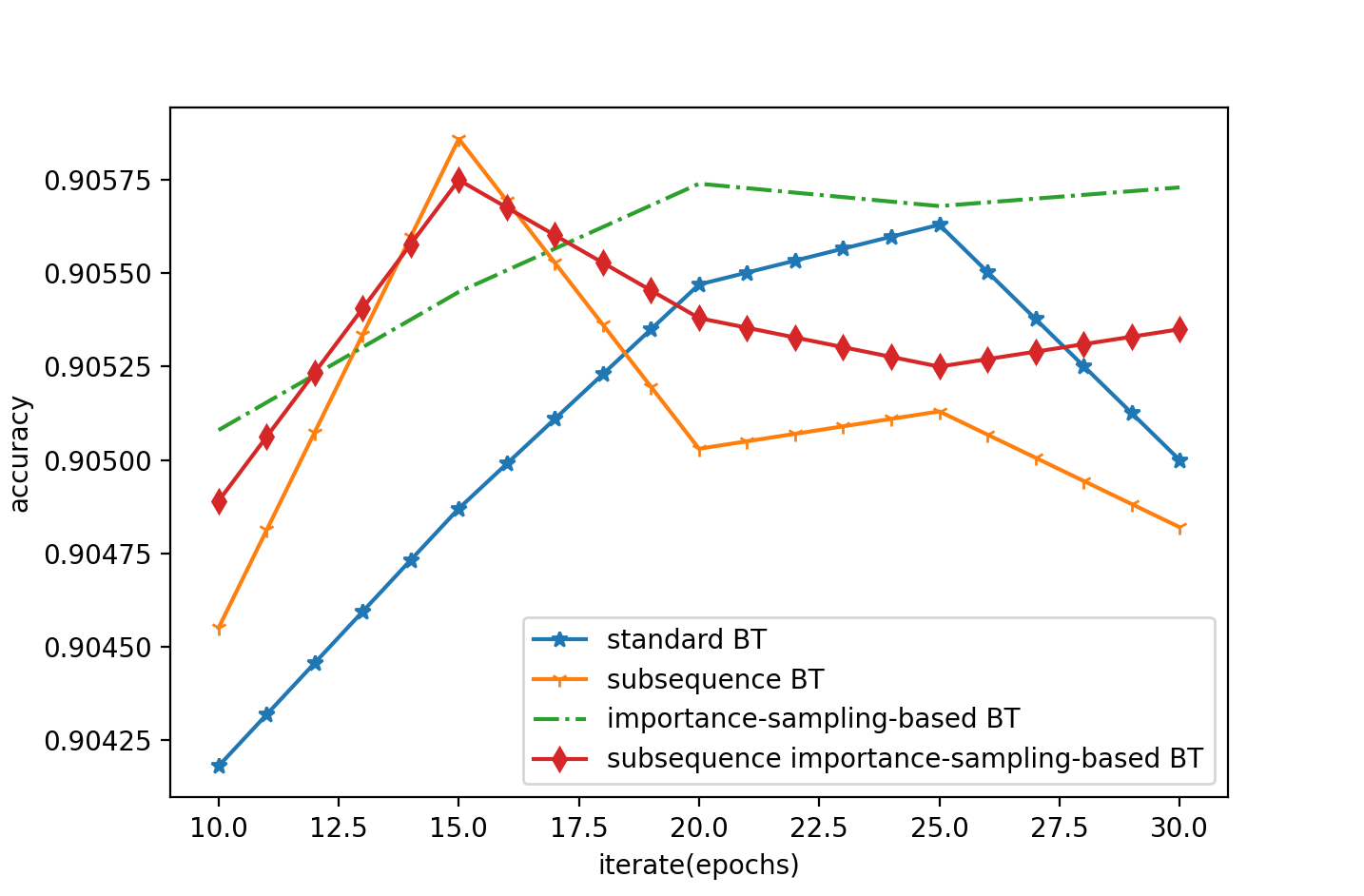}
    \caption{Zoomed-in attention heatmap for Yelp.}
    \label{fig:yelp_zoom}
  \end{minipage}
  \hfill
  \begin{minipage}[t]{0.5\textwidth}
    \centering
    \includegraphics[width=\textwidth]{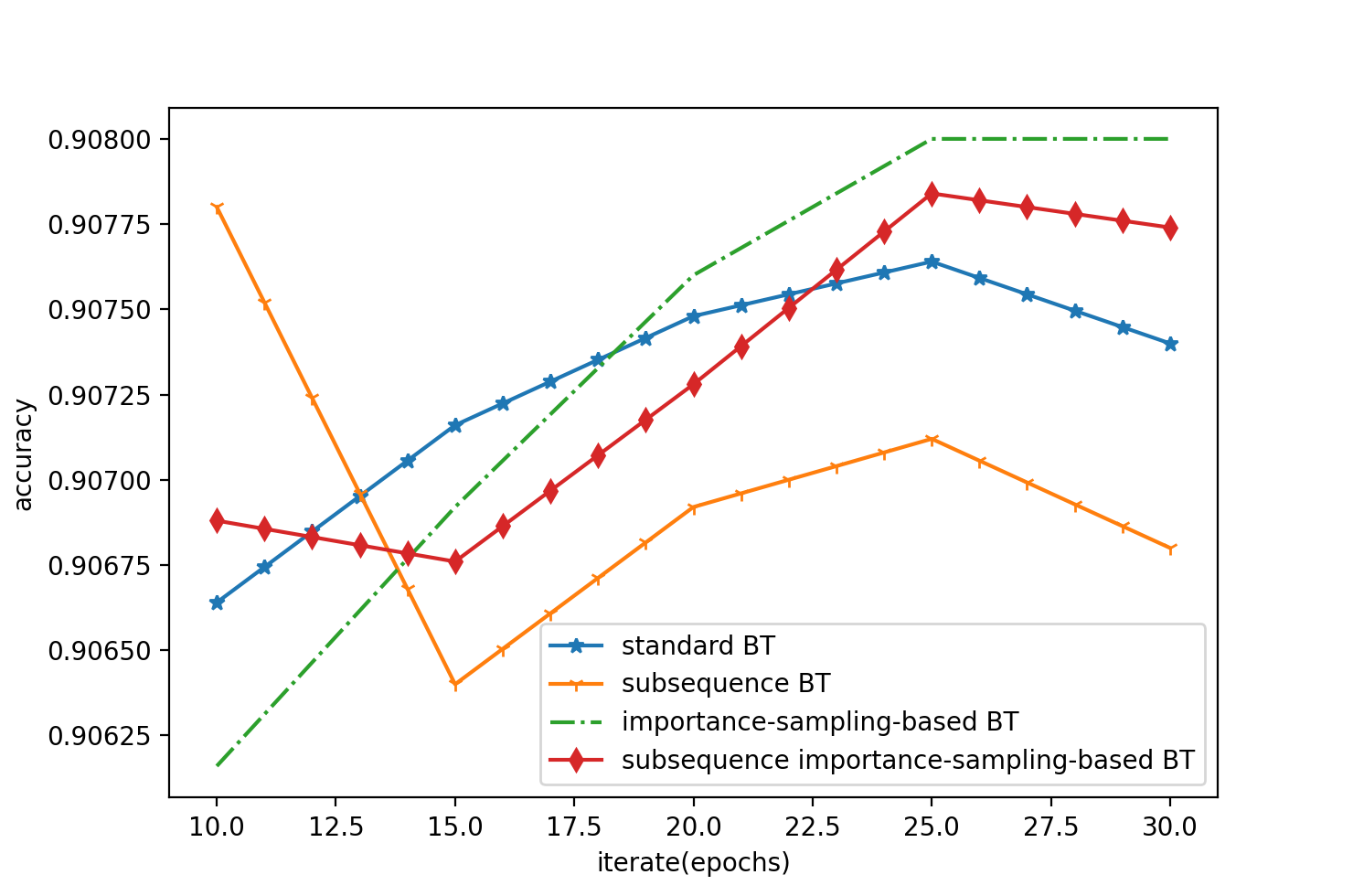}
    \caption{Zoomed-in attention heatmap for Amazon.}
    \label{fig:amazon_zoom}
    \end{minipage}
\end{figure}

\begin{table*}[t]
\scriptsize
\centering
\resizebox{\linewidth}{!}{%
\begin{tabular}{|l|r|r|r|r|r|r|}
\hline
& \multicolumn{1}{c|}{Transformer} 
& \multicolumn{1}{c|}{\begin{tabular}[c]{@{}c@{}}Subsequence\\Transformer\end{tabular}} 
& \multicolumn{1}{c|}{BoostTransformer} 
& \multicolumn{1}{c|}{\begin{tabular}[c]{@{}c@{}}Subsequence\\BoostTransformer\end{tabular}} 
& \multicolumn{1}{c|}{\begin{tabular}[c]{@{}c@{}}Importance-Sampling\\BoostTransformer\end{tabular}} 
& \multicolumn{1}{c|}{\begin{tabular}[c]{@{}c@{}}Subsequence\\Importance-Sampling\\BoostTransformer\end{tabular}} \\ \hline
IMDB   & 21 & 14 & 23 & 17 & 13 & 12 \\ \hline
Yelp   & 76 & 52 & 84 & 66 & 52 & 46 \\ \hline
Amazon & 75 & 53 & 79 & 58 & 46 & 41 \\ \hline
\end{tabular}%
}
\caption{
Running time (in seconds) for Transformer variants across IMDB, Yelp, and Amazon. "Subsequence" uses attention-guided token selection; "Importance-Sampling" selects samples by residuals.
}
\label{tb:timeTrans}
\end{table*}

Furthermore, we zoom in on the performances at iterates larger than 2. In Figures \ref{fig:imdb_zoom}-\ref{fig:amazon_zoom}, compared with the standard BoostTransformer, we observe that the subsequence BoostTransformer, importance-sampling-based BoostTransformer and subsequence importance-sampling-based BoostTransformer demonstrate a superior performance. Therefore, we conclude that the subsequence and importance sampling techniques are beneficial for the boosting algorithms. Moreover, we observe that the importance-sampling-based BoostTransformer gradually improves its performance and maintains its performance later on, while the subsequence BoostTransformer hits its best accuracy in early epochs and then starts fluctuating and decaying.  The gap between the importance-sampling-based BoostTransformer and the subsequence BoostTransformer is more significant on the IMDB dataset, which has a much smaller size than the Yelp and Amazon polarity review datasets.

For the subsequence importance-sampling-based BoostTransformer, compared to the subsequence BoostTransformer, although the subsequence importance-sampling-based BoostTransformer does not fluctuate and decrease as much as the subsequence BoostTransformer, which is more obvious in a small dataset (i.e. the IMDB dataset), its best accuracy is lower than that of the subsequence BoostTransformer, which is more obvious in larger datasets (i.e. the Yelp and Amazon datasets). On the other hand, compared to the importance-sampling-based BoostTransformer, although the subsequence importance-sampling-based BoostTransformer obtains its best accuracy earlier than the importance-sampling-based BoostTransformer,  its overall performance fluctuates while the importance-sampling-based BoostTransformer keeps increasing and maintains its high-quality performance in all of the datasets, which implies that the subsequence importance-sampling-based BoostTransformer is less stable than the importance-sampling-based BoostTransformer. In conclusion, the subsequence BoostTransformer fits well for datasets with enough samples and the importance-sampling-based BoostTransformer is more suitable for datasets with a limited number of samples. As we can see from Table \ref{tb:timeTrans}, the subsequence technique not only improves the performance of the boosting algorithms but also reduces the running time. Furthermore, the importance sampling technique reduces the running time significantly without hurting the performance.

\section{Limitation}
When minimizing runtime is the top priority, the Subsequence Transformer offers the fastest training with acceptable performance. However, if both accuracy and efficiency are critical, especially for large datasets like Yelp and Amazon—the Subsequence BoostTransformer provides a compelling balance, achieving superior and more stable performance with only a slight increase in runtime. Although the Subsequence Importance-Sampling BoostTransformer combines both strategies, it shows less stability and a lower peak accuracy on larger datasets, suggesting that its benefits are context-dependent. Overall, the Subsequence BoostTransformer is best suited for large, resource-constrained tasks, while the Importance-Sampling BoostTransformer is preferable for small data regimes where robustness is essential.

% Entries for the entire Anthology, followed by custom entries
\bibliographystyle{IEEEtran} % use IEEE style
\bibliography{custom}     

\newpage
\onecolumn
\section*{Appendix}
\addcontentsline{toc}{section}{Appendix}  % Optional: adds to TOC

\section*{A. Background (BoostCNN)}
\label{sec:background}
We start with a brief overview of multiclass boosting. Given a sample $x_i\in\mathcal{X}$ and its class label $z_i\in\left\{1,2,\cdots,M\right\}$, multiclass boosting is a method that combines several multiclass predictors $g_t:\mathcal{X}\rightarrow \mathbb{R}^d$ to form a strong committee $f(x)$ of classifiers, i.e. $f(x)=\sum_{t=1}^N\alpha_t g_t(x)$ where $g_t$ and $\alpha_t$ are the weak learner and coefficient selected at the $t^{\mathrm{th}}$ boosting iteration. There are various approaches for multiclass boosting such as \cite{Hastie2009MulticlassA}, \cite{Mukherjee2013ATO}, \cite{Saberian2011MulticlassBT}; we use the GD-MCBoost method of \cite{Saberian2011MulticlassBT}, \cite{Moghimi2016BoostedCN} herein. For simplicity, in the rest of the paper, we assume that $d=M$.

Standard BoostCNN \cite{Moghimi2016BoostedCN} trains a boosted predictor $f(x)$ by minimizing the risk of classification
\begin{align}
\label{risk function}
    \mathcal{R}[f]&=\mathrm{E}_{X,Z}\left[L(z,f(x)) \right]\nonumber\\
    &\approx \frac{1}{\left|\mathcal{D} \right|}\sum_{(x_i,z_i)\in\mathcal{D}}L(z_i,f(x_i)),
\end{align}
where $\mathcal{D}$ is the set of training samples and 
\begin{align*}
%\label{loss function}
    L(z,f(x))=\sum_{j=1, j\neq z}^M e^{-\frac{1}{2}\left[ \left\langle y_{z},f(x)\right\rangle -\left\langle y_j,f(x) \right\rangle \right]},
\end{align*}
given $y_k=\mathds{1}_k\in\mathbb{R}^M$, i.e. the $k^{\mathrm{th}}$ unit vector. The minimization is via gradient descent in a functional space. Standard BoostCNN starts with $f(x)=\mathbf{0}\in\mathbb{R}^d$ for every $x$ and iteratively computes the directional derivative of risk (\ref{risk function}), for updating $f(x)$ along the direction of $g(x)$
\begin{align}
    \delta \mathcal{R}[f;g]&=\left.\frac{\partial \mathcal{R}[f+\epsilon g]}{\partial \epsilon}\right|_{\epsilon=0}\nonumber\\
    &=-\frac{1}{2\left| \mathcal{D}\right|}\sum_{(x_i,z_i)\in\mathcal{D}}\sum_{j=1}^Mg_j(x_i)w_j(x_i,z_i)\nonumber\\
    &=-\frac{1}{2\left| \mathcal{D}\right|}\sum_{(x_i,z_i)\in\mathcal{D}} g(x_i)^Tw(x_i,z_i),
    \label{functional gradient}
\end{align}
where 
\begin{align}
\label{weight update}
    w_k(x,z)=\left\{\begin{matrix}
&-e^{-\frac{1}{2}\left [ f_{z}(x)-f_k(x) \right ]},\quad k\neq z\\ 
&\sum_{j=1,j\neq k}^M e^{-\frac{1}{2}\left [ f_{z}(x)-f_j(x) \right ]}, \quad k=z,
\end{matrix}\right.
\end{align}
and $g_j(x_i)$ computes the directional derivative along $\mathds{1}_j$. Then, standard BoostCNN selects a weak learner $g^*$ that minimizes (\ref{functional gradient}), which essentially measures the similarity between the boosting weights $w(x_i,z_i)$ and the function values $g(x_i)$. Therefore, the optimal network output $g^*(x_i)$ has to be proportional to the boosting weights, i.e. 
\begin{align}
\label{eq:grad-weight}
    g^*(x_i)=\beta w(x_i, z_i),
\end{align}
for some constant $\beta > 0$. Note that the exact value of $\beta$ is irrelevant since $g^*(x_i)$ is scaled when computing $\alpha^*$. Consequently, without loss of generality, we assume $\beta=1$ and convert the problem to finding a network $g(x)\in \mathbb{R}^M$ that minimizes the square error loss
\begin{align}
\label{weak learner train}
    \mathcal{L}(w,g)=\sum_{(x_i,z_i)\in\mathcal{D}}\left\|g(x_i)-w(x_i,z_i)\right\|^2.
\end{align}
After the weak learner is trained, BoostCNN applies a line search to compute the optimal step size along $g^*$,

\begin{align}
    \alpha^*=\argmin_{\alpha\in\mathbb{R}}\mathcal{R}[f+\alpha g^*].
    \label{boost parameter}
\end{align}

Finally, the boosted predictor $f(x)$ is updated as $f=f+\alpha^* g^*$.

\section*{B. Proof of Theorem \hyperref[thm:1]{1}\label{pf:thm1}}
\begin{proof}
Given a probability distribution $P$ for dataset $(x_i,z_i)$, by assumption, the stochastic gradient of the loss function is unbiased, i.e.
\begin{align}
\label{eq:unbiased grad}
\mathbb{E}_{P}\left[\frac{\partial \bar{L}_i(z_i,f(x_i))}{\partial f} \right]=\frac{\partial \mathcal{R}}{\partial f},
\end{align}
with
\begin{align*}
  \mathcal{R}[f] = \frac{1}{\left|\mathcal{D}\right|}\sum_{(x_i,z_i)\in\mathcal{D}}L(z_i,f(x_i))=\mathbb{E}_{P}\left[\bar{L}_i(z_i,f(x_i)) \right]
\end{align*}
and
\begin{align*}
    \bar{L}_i(z_i,f(x_i))=\frac{1}{\left|\mathcal{D}\right|P(I=i)}L(z_i,f(x_i)).
\end{align*}

At iterate $t$, in importance-sampling-based Boosting algorithms, given probability distribution $P_t$ and $P_t^i = P_t(I=i)$, the current gradient given a subset $\mathcal{I}^t$ of samples is
\begin{align}
\label{eq:grad_explain}
    \bar{g}_t^{\mathcal{I}^t}&=\frac{1}{\left|\mathcal{I}^t\right|}\sum_{(x_i,z_i)\in\mathcal{I}^t}\left.\frac{\partial \bar{L}_i(z_i,f_{t-1}(x_i)+\epsilon g(x_i))}{\partial g}\right|_{\epsilon =0}\nonumber\\
    &=\frac{1}{\left|\mathcal{I}^t\right|}\sum_{(x_i,z_i)\in\mathcal{I}^t}\frac{1}{\left|\mathcal{D}\right|P^i_t}\left.\frac{\partial L_i(z_i,f_{t-1}(x_i)+\epsilon g(x_i))}{\partial g}\right|_{\epsilon =0}\nonumber\\
    &= \frac{1}{\left|\mathcal{I}^t\right|}\sum_{(x_i,z_i)\in\mathcal{I}^t}\frac{1}{\left|\mathcal{D}\right|P^i_t} g_t^i=\frac{1}{\left|\mathcal{I}^t\right|}\sum_{k=1}^{\left|\mathcal{I}^t\right|}G_k,
\end{align}
where $g_t^i = \left.\frac{\partial L_i(z_i,f_{t-1}(x_i)+\epsilon g(x_i))}{\partial g}\right|_{\epsilon =0}$ and $G_k$ is the random variable corresponding to sample $k$. Note that 
\begin{align}
    g_t=\frac{\partial \mathcal{R}[f_{t-1};g]}{\partial g}=\left.\frac{\partial \mathcal{R}[f_{t-1}+\epsilon g]}{\partial g}\right|_{\epsilon =0}
\end{align}
and
\begin{align}
\label{eq:importance grad}
    \mathbb{E}_{P_t}(G_t)=\mathbb{E}_{P_t}\left[\bar{g}_t^{\mathcal{I}^t} \right]=g_t,
\end{align}
due to the unbiased gradient in (\ref{eq:unbiased grad}). Given $\bar{g}_{t}^{\mathcal{I}^{t}}$ computed on a subset $\mathcal{I}^{t}$ with probability distribution $P_{t}$, we consider
\begin{align}
    \mathbb{E}_{P_{t}}\left[ \Delta^{(t)} \right]&=\left\|f_{t-1}-f^*\right\|^2-\mathbb{E}_{P_{t}}\left[\left.\left\| f_{t}-f^*\right\|^2 \right|\mathcal{F}^{t-1}\right]\nonumber\\
    &=\left\|f_{t-1}-f^*\right\|^2-\mathbb{E}_{P_{t}}\left[\left.\left\| f_{t-1}+\alpha_{t}\bar{g}_{t}^{\mathcal{I}^{t}}-f^*\right\|^2 \right|\mathcal{F}^{t-1}\right]\nonumber\\
    &=-2\alpha_{t}\left\langle f_{t-1}-f^*, \mathbb{E}_{P_{t}}\left[\left.\bar{g}_{t}^{\mathcal{I}^{t}}\right|\mathcal{F}^{t-1}\right]\right\rangle -\alpha_{t}^2 \mathbb{E}_{P_{t}}\left[\left.\left\| \bar{g}_{t}^{\mathcal{I}^{t}}\right\|^2 \right|\mathcal{F}^{t-1}\right].\label{eq:diff1}
\end{align}
By inserting (\ref{eq:importance grad}) into (\ref{eq:diff1}), we have
\begin{align}
\label{eq:diff2}
     \mathbb{E}_{P_{t}}\left[ \Delta^{(t)} \right]=-2\alpha_{t}\left\langle f_{t-1}-f^*, g_{t}\right\rangle -\alpha_{t}^2 \mathbb{E}_{P_{t}}\left[\left.\left\| \bar{g}_{t}^{\mathcal{I}^{t}}\right\|^2 \right|\mathcal{F}^{t-1}\right].
\end{align}
Thus, maximizing $\mathbb{E}_{P_{t}}\left[ \Delta^{(t)} \right]$ is equivalent to minimizing the variance of the gradient, i.e. $\mathbb{E}_{P_{t}}\left[\left.\left\| \bar{g}_{t}^{\mathcal{I}^{t}}\right\|^2 \right|\mathcal{F}^{t-1}\right]$. Consequently, consider
\begin{align}
\label{eq:min_imp}
    \mathbb{E}_{P_{t}}\left[\left.\left\| \bar{g}_{t}^{\mathcal{I}^{t}}\right\|^2 \right|\mathcal{F}^{t-1}\right] &=\mathbb{E}_{P_{t}}\left[\left.\left\| \bar{g}_{t}^{\mathcal{I}^{t}}-g_t+g_t\right\|^2 \right|\mathcal{F}^{t-1}\right]\nonumber\\
    &=\mathbb{E}_{P_{t}}\left[\left.\left\| \bar{g}_{t}^{\mathcal{I}^{t}}-g_t\right\|^2 \right|\mathcal{F}^{t-1}\right]+
    \mathbb{E}_{P_{t}}\left[\left. 2<\bar{g}_{t}^{\mathcal{I}^{t}}-g_t,g_t> \right|\mathcal{F}^{t-1}\right]+\left\|g_t\right\|^2    \nonumber\\
    &=\mathbb{E}_{P_{t}}\left[\left.\left\| \bar{g}_{t}^{\mathcal{I}^{t}}-g_t\right\|^2 \right|\mathcal{F}^{t-1}\right]+2<\mathbb{E}_{P_{t}}\left[\left.\bar{g}_{t}^{\mathcal{I}^{t}}\right|\mathcal{F}^{t-1}\right]-g_t,g_t> +\left\|g_t\right\|^2    \nonumber\\
    &=\mathbb{E}_{P_{t}}\left[\left.\left\| \bar{g}_{t}^{\mathcal{I}^{t}}-g_t\right\|^2 \right|\mathcal{F}^{t-1}\right]+\left\|g_t\right\|^2,
\end{align}
where the last equality holds due to (\ref{eq:importance grad}). Continuing, we have
{\footnotesize
\begin{align}
    &\mathbb{E}_{P_{t}}\left[\left.\left\| \bar{g}_{t}^{\mathcal{I}^{t}}-g_t\right\|^2 \right|\mathcal{F}^{t-1}\right]= \mathbb{E}_{P_{t}}\left[\left.\left\| \frac{1}{\left|\mathcal{I}^{t}\right|}\sum_{i\in\mathcal{I}^{t}}\left(\frac{1}{\left|\mathcal{D}\right|P_t^i} g_{t}^i-g_t\right)\right\|^2 \right|\mathcal{F}^{t-1}\right]\nonumber\\
    &=\frac{1}{\left|\mathcal{I}^{t}\right|^2}\mathbb{E}_{P_{t}}\left[\left.\left\| \sum_{i\in\mathcal{I}^{t}}\left(\frac{1}{\left|\mathcal{D}\right|P_t^i} g_{t}^i-g_t\right)
    \right\|^2 \right|\mathcal{F}^{t-1}\right]\nonumber\\
    &=\frac{1}{\left|\mathcal{I}^{t}\right|^2}\mathbb{E}_{P_{t}}\left[\left.\sum_{i\in\mathcal{I}^{t}}\left\| \frac{1}{\left|\mathcal{D}\right|P_t^i} g_{t}^i-g_t\right\|^2+\sum_{(i,j)\in\mathcal{I}^{t},i\neq j}<\frac{1}{\left|\mathcal{D}\right|P_t^i} g_{t}^i-g_t, \frac{1}{\left|\mathcal{D}\right|P_t^j} g_{t}^j-g_t> \right|\mathcal{F}^{t-1}\right]\nonumber\\
    &=\frac{1}{\left|\mathcal{I}^{t}\right|^2}\left(\mathbb{E}_{P_{t}}\left[\left.\sum_{i\in\mathcal{I}^{t}}\left\| \frac{1}{\left|\mathcal{D}\right|P_t^i} g_{t}^i-g_t\right\|^2\right|\mathcal{F}^{t-1}\right]+\mathbb{E}_{P_{t}}\left[\left. \sum_{(i,j)\in\mathcal{I}^{t},i\neq j}<\frac{1}{\left|\mathcal{D}\right|P_t^i} g_{t}^i-g_t, \frac{1}{\left|\mathcal{D}\right|P_t^j} g_{t}^j-g_t> \right|\mathcal{F}^{t-1}\right]\right)\nonumber\\
    &=\frac{|\mathcal{I}^{t}|}{\left|\mathcal{I}^{t}\right|^2}\mathbb{E}_{P_{t}}\left[\left.\left\| G_1-g_t\right\|^2\right|\mathcal{F}^{t-1}\right]+\frac{2}{\left|\mathcal{I}^{t}\right|^2}\binom{\left|\mathcal{I}^{t}\right|}{2} <\mathbb{E}_{P_{t}}\left[\left.G_1-g_t\right|\mathcal{F}^{t-1}\right], \mathbb{E}_{P_{t}}\left[\left.G_2-g_t\right|\mathcal{F}^{t-1}\right]> \nonumber\\
    &= \frac{1}{\left|\mathcal{I}^{t}\right|}\mathbb{E}_{P_{t}}\left[\left.\left\| G_1-g_t
    \right\|^2 \right|\mathcal{F}^{t-1}\right]\nonumber\\
    &=\frac{1}{\left|\mathcal{I}^{t}\right|}\left(\mathbb{E}_{P_{t}}\left[\left.\left\| G_1
    \right\|^2 \right|\mathcal{F}^{t-1}\right]-\left\|g_t\right\|^2\right).
    \label{eq:final_min}
\end{align}
}
The fifth equality holds since $G_i$ and $G_j$ are independent, moreover, the seventh equality is valid due to (\ref{eq:importance grad}). Inserting (\ref{eq:final_min}) into (\ref{eq:min_imp}) yields
\begin{align}
    \mathbb{E}_{P_{t}}\left[\left.\left\| \bar{g}_{t}^{\mathcal{I}^{t}}\right\|^2 \right|\mathcal{F}^{t-1}\right] = \frac{1}{\left|\mathcal{I}^{t}\right|\left|\mathcal{D}\right|^2}\sum_{(x_i,z_i)\in\mathcal{D}}\frac{1}{P_t^i}\left\| g_{t}^i\right\|^2 -\frac{1}{\left|\mathcal{I}^{t}\right|}\left\|g_t\right\|^2 + \left\|g_t\right\|^2.
\label{eq:final_goal_imp}
\end{align}
As (\ref{eq:final_goal_imp}) shows, maximizing $\mathbb{E}_{P_{t}}\left[ \Delta^{(t)} \right]$ is equivalent to minimizing $\frac{1}{P_t^i}\left\| g_{t}^i\right\|^2$. By using the Jensen's inequality, it follows that
\begin{align}
    \sum_{(x_i,z_i)\in\mathcal{D}}\frac{1}{P_t^i}\left\| g_{t}^i\right\|^2=\sum_{(x_i,z_i)\in\mathcal{D}}P_t^i\left( \frac{\left\|g_{t}^i\right\|}{P_t^i}\right)^2\geq \left(\sum_{(x_i,z_i)\in\mathcal{D}} \left\|g_{t}^i\right\|\right)^2,
\end{align}
and the equality holds when $P_t^i=\left\|g_{t}^i\right\|/\sum_{(x_j,z_j)\in\mathcal{D}}\left\|g_{t}^j\right\|$. Note that $g_{t}^i=\left.\frac{\partial L(z_i,f_{t-1}(x_i)+\epsilon g(x_i)}{\partial g}\right|_{\epsilon =0}$ is proportional to the boosting weights $w_{t}(x_i,z_i)$ of sample $(x_i,z_i)$ as stated in (\ref{eq:grad-weight}), therefore, the claim in (\ref{importance sample distribution}) follows.
\end{proof} 

\end{document}